\documentclass{article}

\usepackage[preprint]{neurips_2026}

\usepackage[utf8]{inputenc}
\usepackage[T1]{fontenc}
\usepackage[hidelinks]{hyperref}
\usepackage{url}
\usepackage{booktabs}
\usepackage{amsfonts}
\usepackage{nicefrac}
\usepackage{microtype}
\usepackage{xcolor}
\usepackage{amsmath,amssymb,amsthm,mathtools}
\usepackage{graphicx}
\usepackage{enumitem}
\usepackage{multirow}
\usepackage{array}
\usepackage{tikz}
\usetikzlibrary{positioning,arrows.meta,calc}

\title{A Controlled Counterexample to Strong Proxy-Based Explanations of OOD Performance:\\
in a Fixed Pretraining-and-Probing Setup}

\author{%
Hongmin Li\textsuperscript{1,2}\\[0.25em]
\normalfont\textsuperscript{1}School of Life Science and Technology, Institute of Science Tokyo\\
\normalfont 2-12-1 Ookayama, Meguro-ku, Tokyo 152-8550, Japan\\
\normalfont\textsuperscript{2}Department of Computational Biology and Medical Sciences\\
\normalfont Graduate School of Frontier Sciences, The University of Tokyo\\
\normalfont 5-1-5 Kashiwanoha, Kashiwa-shi, Chiba 277-8561, Japan\\
\normalfont\texttt{lihongmin@edu.k.u-tokyo.ac.jp}\\[0.25em]
\normalfont\small Researcher, School of Life Science and Technology, Institute of Science Tokyo;\\
\normalfont\small Guest Researcher, Department of Computational Biology and Medical Sciences,\\
\normalfont\small Graduate School of Frontier Sciences, The University of Tokyo.\\
\normalfont\small ORCID: \href{https://orcid.org/0000-0003-0228-0600}{0000-0003-0228-0600}
}

\newtheorem{definition}{Definition}
\newtheorem{proposition}{Proposition}
\newtheorem{remark}{Remark}

\newcommand{\cS}{\mathcal{S}}
\newcommand{\cD}{\mathcal{D}}
\newcommand{\cT}{\mathcal{T}}

\newcommand{\OODPerf}{\mathrm{OODPerf}}

\begin{document}

\maketitle

\begin{abstract}
Task-agnostic structure proxies are often used to interpret why one pretraining
corpus transfers better than another, but such explanations require the proxy to
track the structure that matters for the downstream task. We test this
requirement in a fixed pretraining-and-probing setup motivated by
computationally bounded notions of learned structure, including
\emph{epiplexity}. The core question is whether a proxy ranking of two
pretraining datasets must agree with their ranking by OOD probe accuracy. We
show that it need not. First, we give a controlled construction in which a
formal structure quantity, its operational proxy, and the task-relevant
structure for a target family separate. We then instantiate the same mechanism
in a synthetic sequence-model experiment: under the primary all-sample
evaluation, the OOD accuracy ranking reverses the proxy ranking in two of three
seeds, with auxiliary diagnostics and ablations supporting the same
interpretation. The counterexample does not reject structure-based explanations
in general; it identifies a boundary on strong proxy-based explanations. A
proxy for total learned structure can fail to track the task-relevant structure
that drives OOD performance, even in a controlled setting.

\end{abstract}

\section{Introduction}
Large-scale pretraining has renewed a basic question: why does one pretraining corpus support stronger out-of-distribution (OOD) performance than another? One recent answer is to appeal to computationally bounded notions of learned structure, including \emph{epiplexity}, and to estimate them with task-agnostic proxies \cite{finzi2026entropy}. That line of work does more than report correlations. It also motivates stronger explanations in which a proxy-based ranking over corpora is expected to align with OOD performance under a fixed evaluation setup.

Some proxy-based explanatory claims in that narrative conflate three distinct objects:
\begin{enumerate}[leftmargin=1.2em]
    \item a \textbf{formal structure quantity} defined at the level of a computationally bounded coding problem;
    \item an \textbf{operational proxy} instantiated through a particular model family, optimizer, representation, and hyperparameter search procedure;
    \item \textbf{task-relevant structure} for a target task family, i.e., the component of learned regularity that actually supports stronger task-specific OOD performance.
\end{enumerate}
Even after fixing a single pretraining-and-probing pipeline and its OOD probe-accuracy metric, these three layers need not coincide. The stronger explanatory requirement is simply that, inside that setup, a proxy should not put dataset A ahead of dataset B while dataset B still achieves higher OOD probe accuracy.

The key conceptual distinction is between \emph{total learned structure} and \emph{task-relevant structure}. A pretraining distribution may contain abundant regularity that is easy to compress and learn, yet largely irrelevant to a target task. Another distribution may contain less total regularity, but more of the regularity that actually supports OOD performance under a fixed evaluation. This separation permits a controlled counterexample.

Figure~\ref{fig:counterexample-overview} summarizes the construction: the same fixed learner, representation, and probe can support different rankings depending on whether the evaluation lens rewards total compressible structure or task-relevant structure under shift.

\begin{figure}[t]
\centering
\includegraphics[width=\linewidth]{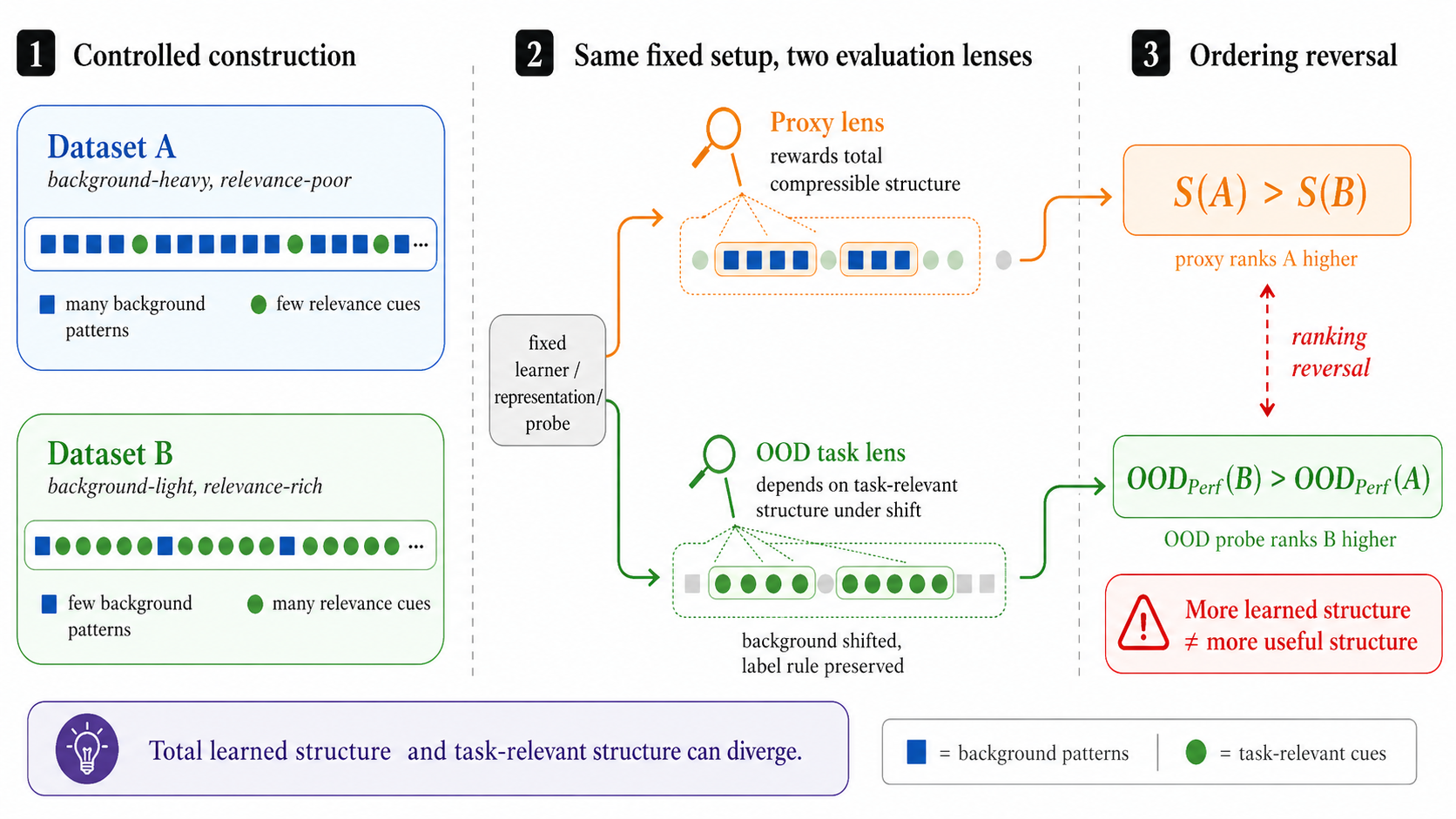}
\caption{Conceptual overview of the controlled counterexample. Dataset A contains more total compressible background structure but fewer task-relevant cues, while dataset B contains less background structure and more task-relevant cues. Under the same fixed pretraining-and-probing setup, a task-agnostic proxy can rank A above B even when the OOD task evaluation ranks B higher.}
\label{fig:counterexample-overview}
\end{figure}

The paper makes the following contributions.
\begin{itemize}[leftmargin=1.2em]
    \item We formalize a three-layer view of structure: \emph{formal}, \emph{operational}, and \emph{task-relevant} structure.
    \item We give an existential controlled counterexample showing that, within a fixed pretraining-and-probing setup, a task-agnostic proxy for total learned structure can rank dataset A above dataset B even when dataset B achieves higher OOD probe accuracy under that setup.
    \item We provide a concrete synthetic data design and a reproducible sequence-model protocol that instantiates the same failure mode empirically.
    \item Empirically, the primary all-sample evaluation reverses the proxy ranking in two of three seeds. A secondary informative-subset diagnostic and complementary ablations support the same mechanism-level interpretation while remaining clearly secondary to the primary reversal.
\end{itemize}

The counterexample therefore shows that, within this fixed pretraining-and-probing setup, strong proxy-based explanations of OOD performance require tracking \emph{task-relevant structure} more closely than total learned structure alone.

\section{Related Work}
\paragraph{Epiplexity and proxy-based explanations.}
The epiplexity program distinguishes unpredictable random content from structural content that a bounded learner can write into its weights \cite{finzi2026entropy}. In that formulation, computation, representation, and observer limitations are part of the quantity being defined.

Empirical work built around that framework necessarily passes through restricted model classes and coding proxies rather than direct optimization of the original formal quantity. A further interpretive step is then required to connect such proxies to task-specific OOD performance on a particular task family. The analysis here focuses on that step: it isolates the distinction between a formal structure quantity, an operational proxy, and task-relevant structure within the fixed pretraining-and-probing setup used in this paper.

\paragraph{Probe-based evaluation and representational claims.}
A nearby literature studies what can and cannot be inferred from probes of learned representations. Linear and shallow nonlinear probes are useful operational tools, but their scores depend jointly on the representation, the probe family, and the decoding budget \cite{alain2016understanding,hewitt2019designing}. MDL-style probing makes the same point in information-theoretic language: an operational score combines information present in the representation with the complexity and inductive bias of the chosen decoder \cite{voita2020information}. The empirical metric used here inherits the same dependence. The empirical claim is therefore about task-specific OOD performance as measured by OOD probe accuracy under this fixed pretraining-and-probing setup, rather than about an observer-independent notion of performance.

\paragraph{Compression-style proxies versus formal coding quantities.}
The empirical proxy studied here is much closer to an MDL-style operational score than to an exact optimization of the formal epiplexity objective. The description-length literature has long separated ideal coding quantities from practical surrogates, restricted model classes, and finite search procedures \cite{rissanen1978modeling,grunwald2007mdl}. The same separation applies here. The relevant question is whether an operational compression-style score can rank dataset A above dataset B while OOD probe accuracy under that same setup is still higher for dataset B, absent additional alignment assumptions.

\paragraph{Task-conditioned data valuation.}
A separate literature scores training examples or datasets by their contribution to a specified supervised objective, for example through Shapley-style data valuation or training-dynamics diagnostics \cite{ghorbani2019data,swayamdipta2020dataset}. These methods are explicitly task-conditional. The present analysis instead considers whether a task-agnostic proxy computed from pretraining alone can rank dataset A above dataset B while later task-specific OOD performance ranks dataset B above dataset A. The point at issue is therefore the explanatory step from an operational proxy to task-specific OOD performance, not the formal quantity in isolation. Task-conditioned data valuation methods incorporate the target objective directly, whereas epiplexity-style proxy claims aim to explain performance differences before the target task is incorporated into the score.

\paragraph{Boundary of proxy-based explanations.}
The controlled construction and empirical instantiation reported here are compatible with settings in which more learned structure correlates with better task-specific OOD performance. The narrower question is whether a total-structure proxy can rank dataset A above dataset B while OOD probe accuracy under the same fixed pretraining-and-probing setup is still higher for dataset B. The contribution is a controlled counterexample to the stronger interpretation that such rankings should stay aligned, while remaining compatible with positive correlations in settings with additional alignment.

\section{Method}
The analysis distinguishes three quantities that are often discussed together.

\begin{definition}[Formal structure]
A \emph{formal structure quantity} is a quantity defined by an explicit coding or optimization problem over an abstract class of bounded observers or programs. In the epiplexity setting, the target formal object is the minimal model description length within a time-bounded two-part code \cite{finzi2026entropy}.
\end{definition}

\begin{definition}[Operational structure]
An \emph{operational structure proxy} is any computable statistic intended to approximate or stand in for a formal structure quantity, but instantiated through a specific model family, representation, optimizer, training protocol, and finite hyperparameter search. Operational structure is therefore pipeline-relative in an engineering sense.
\end{definition}

\begin{definition}[Task-relevant structure]
Given a target task family $\cT$, the \emph{task-relevant structure} of a pretraining distribution $\cD$ is the component of its learnable regularity that supports successful task-specific OOD performance on tasks in $\cT$. This quantity is conditional on both the observer and the target task family.
\end{definition}

These three quantities play different roles. A formal quantity is a statement about the idealized object of study. An operational proxy is a pipeline-dependent measurement. Task-relevant structure is a task-conditional notion. A single scalar need not faithfully represent all three.

The empirical analysis fixes a single pretraining-and-probing pipeline: a causal language model for pretraining, a compression-style scalar computed from pretraining validation loss, and a frozen-feature probe whose OOD probe accuracy is the reported task metric. Throughout the paper, \emph{fixed probe evaluation} is shorthand for this full pretraining-and-probing setup together with OOD probe accuracy as the task metric. Let $\OODPerf(\cD \to \cT)$ denote task-specific OOD performance as measured by OOD probe accuracy under that fixed probe evaluation, not a universal evaluation functional over all possible observers and tasks.

\begin{definition}[Monotone explanatory claim]
Fix a target task family $\cT$ and a fixed probe evaluation. A task-agnostic proxy $\cS$ supports the \emph{monotone explanatory claim} in this setting if, whenever two corpora $\cD_A$ and $\cD_B$ satisfy
\[
\cS(\cD_A) > \cS(\cD_B),
\]
task-specific OOD performance under that same evaluation does not put $\cD_B$ ahead of $\cD_A$:
\[
\OODPerf(\cD_A \to \cT) \ge \OODPerf(\cD_B \to \cT).
\]
Equivalently, if the proxy puts $\cD_A$ ahead of $\cD_B$, then OOD probe accuracy under that fixed probe evaluation should not end up higher for $\cD_B$ than for $\cD_A$.
\end{definition}

\subsection{A Controlled Counterexample}

The next proposition gives one such counterexample for that fixed pretraining-and-probing setup.

\subsection{Setup}

Let $B$ denote a latent \emph{background variable} and $R$ a latent \emph{relevance variable}. We generate observed sequences $X$ by combining two mechanisms:
\begin{itemize}[leftmargin=1.2em]
    \item a background generator $g_B(B)$ that contributes abundant but task-irrelevant structure;
    \item a relevance generator $g_R(R)$ that contributes less total structure but determines task labels.
\end{itemize}
The target task is to predict a label $Y = h(R)$ under a distribution shift that preserves the mechanism $h$ while perturbing surface-level aspects of $g_B$.

Let $\cS(\cD)$ be a scalar proxy for \emph{total learned structure} under this fixed pretraining and measurement pipeline. We do not assume that $\cS$ equals formal epiplexity; the argument only requires that $\cS$ increases when the learner can extract more compressible regularity from the data while remaining agnostic to the target task family.

\subsection{Main proposition}

\begin{proposition}[Within a fixed probe evaluation, proxy rankings can reverse relative to task-specific OOD performance]
\label{prop:main}
Fix an observer class, a representation, a pretraining pipeline, and an OOD evaluation protocol. Then there exist two pretraining distributions $\cD_A$ and $\cD_B$, a target task family $\cT$, and a task-agnostic scalar proxy $\cS$ computed from that fixed pipeline such that
\[
\cS(\cD_A) > \cS(\cD_B)
\]
while
\[
\OODPerf(\cD_A \to \cT) < \OODPerf(\cD_B \to \cT).
\]
Thus, even within a single fixed pretraining-and-probing setup, a total-structure proxy can rank $\cD_A$ above $\cD_B$ while $\cD_B$ still achieves higher OOD probe accuracy under that setup.
\end{proposition}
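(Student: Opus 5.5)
The plan is to prove Proposition~\ref{prop:main} by explicit construction, using the background/relevance setup above. The existential quantifiers are generous: we choose the distributions, the task family, and the proxy. The only real constraints are that $\cS$ be task-agnostic, computed from the fixed pipeline, and monotone in extractable compressible regularity. Note also that the observer, representation, and pipeline are fixed \emph{before} the distributions are chosen. The construction therefore has to work for an arbitrary (non-degenerate) learner. This is the one point where care is needed.

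\textbf{Step 1: the two corpora.} Build both from the same mixture $X = (g_B(B), g_R(R))$, varying only the allocation. Sequences consist of segments. In $\cD_A$, a fraction $1-\epsilon$ of segments are highly regular background (e.g.\ long deterministic periodic patterns keyed by $B$), and a fraction $\epsilon$ carry the relevance cue $g_R(R)$. In $\cD_B$ the proportions are swapped, so background occupies a fraction $\epsilon$. Add i.i.d.\ noise to the non-cue positions of $\cD_B$ so that its total compressible content is lower.

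\textbf{Step 2: the task and the proxy.} Let $\cT$ be prediction of $Y=h(R)$ under a shift that resamples the surface form of $g_B$ (e.g.\ new periods or phases) while keeping $h$ fixed. Let $\cS(\cD)$ be the reduction in per-token validation loss of the pretrained model relative to a unigram baseline. This quantity is task-agnostic and increases with learned compressible regularity. A direct computation then gives $\cS(\cD_A) > \cS(\cD_B)$. Background periodic tokens become nearly zero-loss once learned, and they dominate $\cD_A$, whereas $\cD_B$ is mostly noise, which is incompressible.

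\textbf{Step 3: OOD performance.} Show that $\OODPerf(\cD_A \to \cT) < \OODPerf(\cD_B \to \cT)$. In $\cD_B$ the relevance cue is frequent, so the pipeline learns features encoding $R$. Because the shift leaves $g_R$ unchanged, the frozen-feature probe recovers $h(R)$ out of distribution. In $\cD_A$ the cue is rare, so the representation is spent on $B$-features, which the shift invalidates. The probe then sits near chance.

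\textbf{Main obstacle.} The hard part is making Step 3 rigorous for an \emph{arbitrary} fixed pipeline, since a sufficiently powerful learner might still pick up $R$ from an $\epsilon$ fraction of the data. My first approach is to exploit the existential freedom in the distributions. I would choose $\epsilon$ small enough, relative to the pipeline's fixed data and compute budget, that under $\cD_A$ the cue appears too rarely to be learned, for instance below the sample size at which the fixed optimizer reduces loss on cue tokens. If that argument proves unworkable, the fallback is to weaken the universality. The statement would then be read as holding for the paper's fixed pipeline, with any minimally capable learner assumed: one that learns a regularity given enough occurrences and fails to learn it given too few. That assumption makes the argument a clean two-regime comparison, and the strict inequalities then follow.
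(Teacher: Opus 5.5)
Your construction is essentially the paper's own: a background-heavy, relevance-poor $\cD_A$ versus a background-light, relevance-rich $\cD_B$, a task-agnostic loss-based proxy that rewards compressible background, and a shift that perturbs only the background while preserving $R \mapsto Y$. Your unigram-relative loss reduction and explicit i.i.d.\ noise in $\cD_B$ are minor variants of the paper's choices. The paper does not resolve your ``main obstacle'' more rigorously than you do: it simply asserts both strict inequalities for ``sufficiently separated'' prevalence and noise levels under the fixed learner, which amounts to your fallback minimal-capability assumption, so your proposal matches its argument at the same level of rigor.
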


\begin{proof}[Proof sketch]
Construct $\cD_A$ so that $X$ contains a high-volume, highly compressible background component generated by $g_B$, together with a weak or rare relevance signal generated by $g_R$. Construct $\cD_B$ so that the background component is simpler or less frequent, but the relevance signal is stronger, more stable, or more directly recoverable. Under this construction, the fixed learner extracts more total regularity from $\cD_A$, so $\cS(\cD_A) > \cS(\cD_B)$. Yet the target task depends only on $R$, and the OOD shift perturbs the surface background statistics while preserving the $R \mapsto Y$ mechanism. Within that fixed probe evaluation, the representation learned from $\cD_B$ therefore supports stronger task-specific OOD performance on $\cT$ than the one learned from $\cD_A$. With sufficiently separated signal strengths, the two inequalities are strict.
\end{proof}

Appendix~A expands the same argument and makes the two strict inequalities explicit without changing the scope of the claim.

\begin{remark}
The proposition is an existential statement inside a fixed probe evaluation, not a universal impossibility theorem over every conceivable proxy, observer, or OOD evaluation metric. Positive empirical correlation can still hold under additional alignment assumptions, e.g., when high-volume structure and task-relevant structure are positively coupled.
\end{remark}

\subsection{A concrete sequence construction}

Proposition~\ref{prop:main} is instantiated below with token sequences.

Each training sequence has length $L = L_B + L_R$. The first $L_B$ positions are produced by a structured background process; the final $L_R$ positions carry a relevance signal.

\paragraph{Background process.}
Let the background subsequence be generated by a templated grammar with strong, nested regularity. For example:
\[
\texttt{OPEN}\ a_1\ a_2\ \cdots\ a_k\ \texttt{MID}\ a_1\ a_2\ \cdots\ a_k\ \texttt{CLOSE},
\]
possibly with periodic filler motifs and shallow Markov dependencies. This creates strong redundancy and long-range predictability.

\paragraph{Relevance process.}
Let the relevance subsequence contain a latent bit $R \in \{0,1\}$ encoded by a sparse motif relation, such as whether two rare marker tokens appear in matching parity classes or whether a mirror pair is present. The motif appears only on a fraction of examples; on the remaining examples the corresponding positions are occupied by distractor tokens that are independent of the label. The target label is $Y = R$.

\paragraph{Distribution $\cD_A$.}
In $\cD_A$, the background grammar is rich and stable while the relevance motif is weak, infrequent, or noisy.

\paragraph{Distribution $\cD_B$.}
In $\cD_B$, the background grammar is simpler while the relevance motif is frequent and stable.

\paragraph{OOD shift.}
At evaluation time we alter the surface realization of the background grammar --- e.g., swap symbol vocabularies, vary template lengths, or permute periodic fillers --- while preserving the motif-to-label mechanism for $R$.

Under this construction a learner can achieve lower pretraining loss and stronger compression-like statistics on $\cD_A$, yet derive features from $\cD_B$ that support stronger task-specific OOD performance on the target task.

\section{Experiments}
The experiments study one controlled empirical instantiation of the ordering-reversal pattern isolated by Proposition~\ref{prop:main}. They fix a single pretraining-and-probing setup: a four-layer causal language model for pretraining, a compression-style proxy derived from validation loss, and a frozen-feature probe whose OOD probe accuracy serves as the task metric. The goal is not to explore the full existential range of Proposition~\ref{prop:main}, but to test one concrete synthetic family designed to reproduce the same mechanism under fixed hyperparameters and seeds. The empirical question is whether this task-agnostic proxy can rank $\cD_A$ above $\cD_B$ even when $\cD_B$ yields higher OOD probe accuracy.

\subsection{Synthetic protocol}

Token sequences have length $64$ over a vocabulary of size $80$. Each sequence is the concatenation of a long background prefix and a six-token relevance suffix of the form
\[
[\texttt{marker}, u_1, \texttt{marker}, u_2, \texttt{query}, \texttt{answer}].
\]
The relevance suffix uses a dedicated 16-token relevance vocabulary split into two latent buckets. On informative examples, \texttt{answer} indicates whether $u_1$ and $u_2$ come from the same bucket; on non-informative examples, $u_1$ and $u_2$ are sampled from the full content vocabulary and the label is independent. We refer to the former as the \emph{informative subset}.

The background prefix is generated by a nested template process with \texttt{OPEN}/\texttt{MID}/\texttt{CLOSE} scaffolds, repeated chunks, and filler tokens. Three scalar controls govern this generator: background strength, repeat probability, and maximum nesting depth. The OOD split applies a harder background-shift regime, which permutes only background token identities and repetition patterns while preserving the relevance suffix semantics.

\paragraph{Primary evaluation configuration.}
Let
\[
\theta = (b, \rho, \eta, p, d)
\]
denote, respectively, background strength, relevance prevalence, relevance noise, repeat probability, and maximum nesting depth. The main configuration is asymmetric by construction:
\[
\theta_A = (0.95, 0.10, 0.08, 0.95, 5), \qquad
\theta_B = (0.30, 0.95, 0.01, 0.35, 2).
\]
Dataset A is therefore background-heavy and relevance-poor, whereas dataset B is background-light and relevance-rich.

\paragraph{Ablation configurations.}
The background-only ablation preserves the background coordinates $(b, p, d)$ from the primary evaluation configuration while setting the relevance coordinates to $(\rho, \eta) = (0.60, 0.03)$ for both datasets. The relevance-only ablation preserves the relevance coordinates $(\rho, \eta)$ from the primary evaluation configuration while setting the background coordinates to $(b, p, d) = (0.60, 0.65, 3)$ for both datasets. These are the only ablations reported in the paper.

\subsection{Models and training}

All reported results use the same fixed pretraining-and-probing setup and the same OOD probe-accuracy metric.
\begin{itemize}[leftmargin=1.2em]
    \item Model: 4-layer causal Transformer with width 128, 4 attention heads, MLP ratio 4, and dropout 0.1.
    \item Objective: next-token prediction.
    \item Optimizer: AdamW with learning rate $3\times 10^{-4}$ and weight decay 0.01.
    \item Data budget: 50k training sequences and 5k sequences each for validation, test, and OOD evaluation.
    \item Pretraining budget: 12 epochs for both datasets and all seeds.
\end{itemize}

After pretraining, the backbone is frozen and a one-hidden-layer MLP probe (hidden size 256, GELU, 100 epochs, learning rate $10^{-3}$, batch size 1024) is fit on the final hidden state of the prefix that excludes the answer token. The probe is trained on the labeled pretraining split and evaluated on held-out test and OOD splits. The OOD evaluation therefore remains inside a fixed probe-training budget, so the comparison is driven by what information the frozen representation exposes rather than by differences in labeled-data allocation across corpora.

\subsection{Metrics and success criteria}

The operational proxy is the compression-style score
\[
\cS(\cD) = -\min_t \mathcal{L}_{\mathrm{val}}^{(t)}(\cD) - \alpha \log(\max(2, N_{\mathrm{params}})),
\]
with $\alpha = 10^{-6}$. Because the architecture is fixed across datasets, higher $\cS$ primarily means lower best validation loss under the chosen observer.

Let $\Pi_{\mathrm{main}}(\cD)$ denote the protocol that fits the probe on the full labeled pretraining split and evaluates it on the full held-out test and OOD splits. Let $\Pi_{\mathrm{diag}}(\cD)$ denote the protocol that first restricts those three labeled splits to informative examples and then refits and evaluates the probe on that filtered data. We then report
\begin{align*}
M_{\mathrm{main}}(\cD) &= \operatorname{Acc}_{\mathrm{OOD}}(\Pi_{\mathrm{main}}(\cD)),\\
M_{\mathrm{diag}}(\cD) &= \operatorname{Acc}_{\mathrm{OOD}}(\Pi_{\mathrm{diag}}(\cD)).
\end{align*}
We refer to $\Pi_{\mathrm{diag}}$ as the \emph{filtered informative-subset diagnostic}. Because it changes the probe-training distribution as well as the evaluation subset, it is a secondary mechanism check rather than a pure test-time slice of $\Pi_{\mathrm{main}}$.

For each seed we summarize the run by three gaps:
\begin{align*}
\Delta_{\mathrm{proxy}} &= \cS(\cD_A) - \cS(\cD_B),\\
\Delta_{\mathrm{main}} &= M_{\mathrm{main}}(\cD_B) - M_{\mathrm{main}}(\cD_A),\\
\Delta_{\mathrm{diag}} &= M_{\mathrm{diag}}(\cD_B) - M_{\mathrm{diag}}(\cD_A).
\end{align*}
A seed realizes the \emph{all-sample reversal} when $\Delta_{\mathrm{proxy}} > 0$ and $\Delta_{\mathrm{main}} > 0$, and it satisfies the \emph{diagnostic criterion} when $\Delta_{\mathrm{proxy}} > 0$ and $\Delta_{\mathrm{diag}} > 0$. The fixed seed set $\{42, 123, 456\}$ is used throughout.

\subsection{Primary Empirical Criterion}

The primary empirical event of interest is the all-sample reversal in which the proxy ranks dataset A above dataset B while OOD probe accuracy is higher for dataset B:
\[
\Delta_{\mathrm{proxy}} > 0 \quad\text{and}\quad \Delta_{\mathrm{main}} > 0.
\]
Because $\Delta_{\mathrm{main}}$ is defined as dataset B minus dataset A, a positive value means that dataset B attains higher OOD probe accuracy even though the proxy ranks dataset A above B. A positive diagnostic gap is reported separately as mechanistic support and does not replace the primary all-sample reversal.

\subsection{Primary Evaluation Results}

Table~\ref{tab:main-protocol-results} summarizes the three-seed primary evaluation. In every seed, the proxy places dataset A ahead of dataset B. The all-sample reversal appears in two of the three seeds, and the filtered informative-subset diagnostic yields positive diagnostic gaps in all three seeds. Because $\Delta_{\mathrm{proxy}} > 0$ in every row of the primary evaluation, a positive diagnostic gap is equivalent here to satisfying the diagnostic criterion.
We therefore interpret the all-sample result as the primary empirical instantiation of the ordering reversal, with the filtered diagnostic and the ablations serving as corroborating mechanistic support rather than as replacement success criteria.

\begin{table}[t]
\centering
\caption{Three-seed summary for the primary evaluation. A positive ``gap B$-$A'' means dataset B achieves higher OOD probe accuracy than dataset A.}
\label{tab:main-protocol-results}
\begin{tabular}{lrrr}
\toprule
Seed & Proxy gap A$-$B & Main gap B$-$A & Diagnostic gap B$-$A \\
\midrule
42 & 1.2406 & -0.0062 & 0.0153 \\
123 & 1.2390 & 0.2494 & 0.2691 \\
456 & 1.2168 & 0.4742 & 0.4836 \\
\midrule
Mean & 1.2321 & 0.2391 & 0.2560 \\
\bottomrule
\end{tabular}
\end{table}

Figure~\ref{fig:main-protocol-gaps} visualizes the same pattern. The main OOD gap is slightly negative for one seed but positive for the other two, whereas the diagnostic gap remains positive throughout. Within this three-seed panel, the sign change occurs only in the all-sample OOD gap; the proxy gap stays positive in every seed, and the filtered informative-subset diagnostic gap stays positive throughout.

\begin{figure}[t]
\centering
\includegraphics[width=\linewidth]{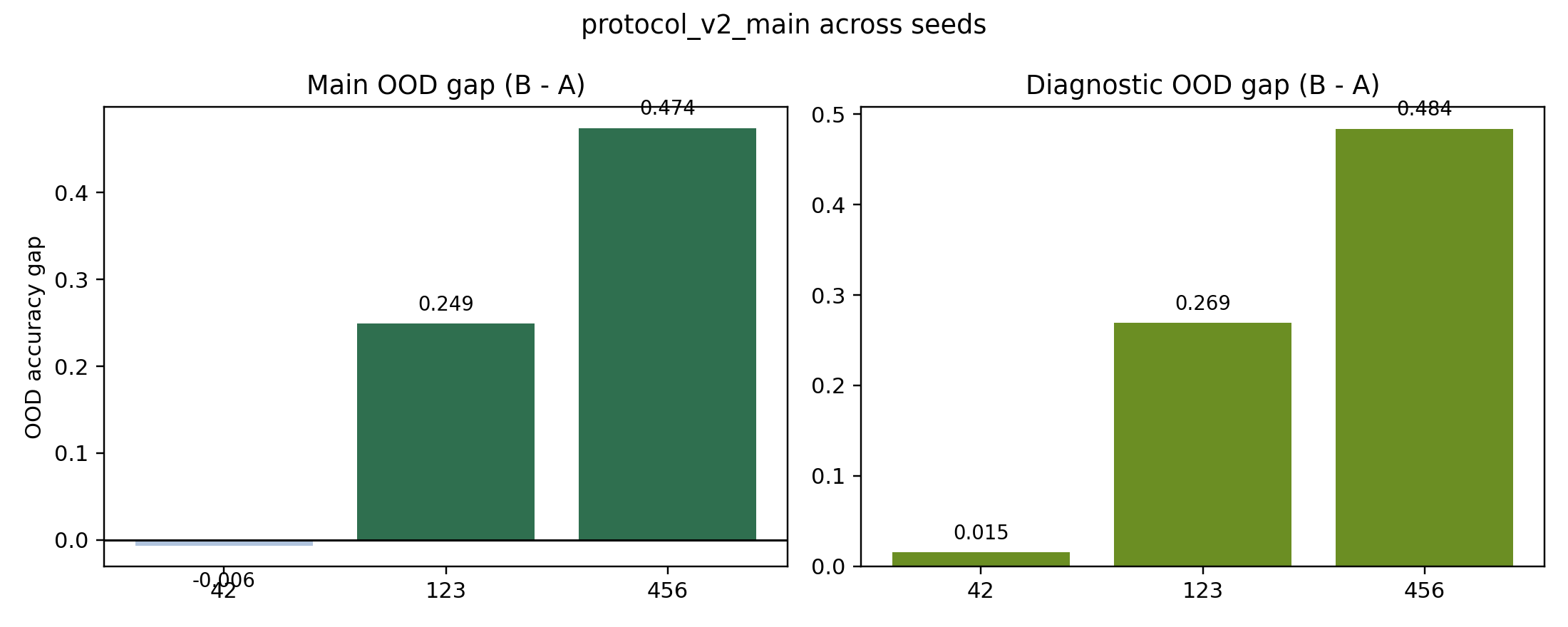}
\caption{Per-seed OOD probe-accuracy gaps for the primary evaluation. The left panel shows the all-sample gap and the right panel shows the filtered informative-subset diagnostic gap. Across these runs, the proxy gap remains positive, while the all-sample OOD probe gap varies in magnitude and sign.}
\label{fig:main-protocol-gaps}
\end{figure}

\subsection{Background-Only Ablation}

The three-seed background-only ablation tests the effect of removing the relevance contrast while keeping the background contrast. Here the proxy gap remains positive in all three seeds and is even slightly larger on average than in the primary evaluation (mean proxy gap $1.2675$ versus $1.2321$). By contrast, the OOD probe gaps almost vanish: the mean all-sample OOD gap drops to $+0.0057$ and the mean filtered informative-subset diagnostic gap becomes slightly negative at $-0.0025$. Only one of the three seeds realizes the all-sample reversal and two of the three satisfy the diagnostic criterion. The slightly negative diagnostic mean is not inconsistent with the latter count: two seeds show only tiny positive diagnostic gaps, while one seed contributes a somewhat larger negative gap. In this protocol, background structure can dominate a total-structure proxy without by itself producing a reliable OOD-performance advantage under the fixed probe evaluation.

Figure~\ref{fig:config-summary} compares the config-level means for the primary evaluation and the background-only ablation. The key pattern is the selective collapse of OOD probe performance once the relevance contrast is removed, despite the persistence of a large proxy gap.

\begin{figure}[t]
\centering
\includegraphics[width=\linewidth]{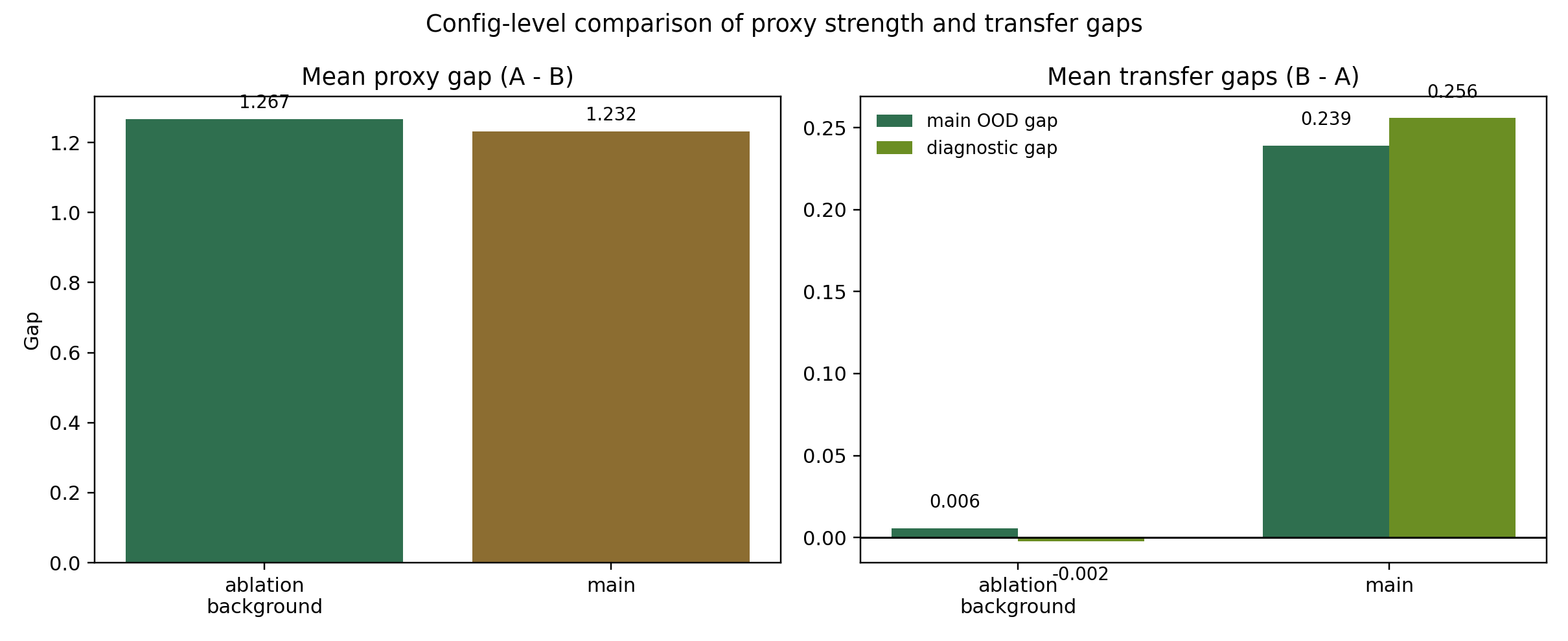}
\caption{Config-level comparison between the primary evaluation and the background-only ablation. The proxy gap remains large in both settings, but the OOD probe gaps collapse toward zero once the relevance contrast is removed. This isolates the role of task-relevant structure rather than total compressible structure.}
\label{fig:config-summary}
\end{figure}

\paragraph{Complementary relevance-only check.}
The relevance-only ablation tests the opposite intervention. Here the proxy gap is no longer positive in any seed: its mean becomes $-0.0389$, and neither the all-sample reversal nor the diagnostic criterion is satisfied in any seed. The mean all-sample and diagnostic OOD probe gaps remain only mildly positive at $+0.0164$ and $+0.0235$. Those positive means do not rescue the counterexample claim, because the reversal fails earlier: the proxy no longer ranks dataset A above dataset B in any seed. Together with the background-only ablation, these results are consistent with the view that the proxy gap in the primary evaluation is driven mainly by background regularities rather than by task-relevant structure alone.

\section{Limitations}
The scope of the paper is limited in several ways. First, the theory section gives an existential controlled counterexample within one fixed pretraining-and-probing setup and states the resulting proposition, but it does not provide a universal impossibility theorem over every proxy, observer, or OOD evaluation metric. Second, the empirical metric in this paper is OOD probe accuracy under that setup rather than end-to-end finetuning or a broader evaluation suite. Third, the filtered informative-subset diagnostic changes the probe-training distribution as well as the evaluation subset, so it functions as a mechanistic check rather than as a pure test-time slice of the primary evaluation. Fourth, the primary all-sample reversal appears in two of the three seeds rather than uniformly, so the empirical claim is a seed-sensitive controlled counterexample rather than a universal empirical regularity. Fifth, we have not yet shown comparable behavior across substantially different model families.

The construction is also synthetic. The synthetic setting isolates causal factors, but it limits how directly the conclusion extrapolates to real pretraining corpora. In particular, the paper does not argue that all positive empirical correlations between total-structure proxies and OOD performance are spurious; it argues only that a proxy ranking dataset A above dataset B does not by itself guarantee higher OOD probe accuracy for dataset A than for dataset B. A broader empirical case would require more seeds, substantially different model families, and at least one non-synthetic task family.

This work studies small synthetic models, releases no human data, and introduces no deployable high-risk system. Any broader implication is therefore methodological: distinctions between total structure and task-relevant structure may affect how researchers evaluate data value, and overstated claims that total-structure proxies should preserve OOD rankings may mislead dataset selection or benchmark interpretation if they are treated as sufficient without additional alignment assumptions.

\section{Conclusion}
\subsection{Scope of the Counterexample}
The counterexample is most directly relevant to claims that, within a fixed pretraining-and-probing setup, a scalar proxy for total learned structure should keep the same dataset ordering when task-specific OOD performance is measured. In other words, if the proxy ranks dataset A above dataset B, OOD probe accuracy in that setup should not be higher for dataset B than for dataset A. It does \emph{not} imply that structure-based metrics are never useful. Instead, using such metrics in that role requires additional assumptions, such as:
\begin{enumerate}[leftmargin=1.2em]
    \item the dominant learnable regularities are positively aligned with the target task family;
    \item the representation exposes task-relevant cues rather than burying them beneath irrelevant redundancy;
    \item the proxy is sufficiently stable across the model-and-optimization pipeline.
\end{enumerate}

Such assumptions are substantive rather than automatic. In this setup, keeping proxy-based A/B rankings aligned with OOD probe accuracy requires tracking task-relevant structure more closely than a total-structure proxy alone can guarantee.

\subsection{Observer, Representation, and Task-Relevant Structure}
Under the fixed setup analyzed here, the empirical utility of a pretraining distribution depends on relations among at least three ingredients:
\begin{enumerate}[leftmargin=1.2em]
    \item an \textbf{observer} (model class, optimization process, compute budget),
    \item a \textbf{representation} (tokenization, serialization, feature exposure), and
    \item a \textbf{task family} (what counts as successful task-specific OOD performance).
\end{enumerate}
The relevant question is therefore not only how much structure exists in the abstract, but how much \emph{reusable, task-relevant structure} is accessible to a particular learner under a particular representation.

This synthetic instantiation makes that distinction empirical as well as conceptual. Theoretically, the paper gives an existential controlled counterexample for this setup. Empirically, one concrete instantiation of the same separation appears in a seed-sensitive form. Under the primary all-sample evaluation, dataset B attains higher OOD probe accuracy than dataset A in two of the three seeds even though the proxy ranks dataset A above B. The filtered informative-subset diagnostic refits the probe on the informative subset. It yields positive diagnostic gaps in all three seeds, and the primary-evaluation proxy gap is also positive in each of those seeds. Because that secondary protocol changes the probe-training distribution, we treat it as mechanistic support rather than as a replacement for the primary all-sample reversal. The two complementary ablations indicate that, in this protocol, background regularity alone is sufficient for a positive proxy gap, whereas relevance-only structure alone does not recreate a positive proxy gap. These auxiliary results support the interpretation of the primary all-sample reversal; they do not replace it as the primary empirical criterion.

These results place a boundary on using total-structure proxies to explain OOD rankings in this setting. Under the fixed setup analyzed here, a scalar proxy for total learned structure can rank dataset A above dataset B even when task-specific OOD performance ranks dataset B above dataset A, because total structure and task-relevant structure can diverge. Computationally bounded notions of structure remain relevant, but this explanatory use requires narrower alignment conditions than a total-structure proxy alone can guarantee. Extending this boundary result into a broader empirical claim would require more seeds, more model families, and evidence beyond the present synthetic task family.

\bibliographystyle{plainnat}
\bibliography{references}

\appendix
\section{Additional Protocol Details}

\subsection{Label-query task family}
The label-query sequence family used here has relevance suffix
\[
[\texttt{marker}, u_1, \texttt{marker}, u_2, \texttt{query}, \texttt{answer}],
\]
where the relevance vocabulary contains 16 tokens split into two latent buckets. On informative examples, \texttt{answer} is \texttt{label1} iff $u_1$ and $u_2$ come from the same bucket, and \texttt{label0} otherwise. With probability equal to the relevance-noise parameter, we flip the bucket assignment used to generate $u_2$. On non-informative examples, $u_1$ and $u_2$ are sampled from the full content vocabulary and the label is sampled independently; these examples lie outside the informative subset.

\subsection{Background generator}
The background prefix is produced by a nested \texttt{OPEN}/\texttt{MID}/\texttt{CLOSE} template with repeated chunks and filler tokens. The background-strength parameter controls how often filler positions copy a template-driven pattern instead of random content. The repeat-probability parameter controls whether right-side chunks mirror left-side chunks, and the maximum nesting depth caps template depth. The harder background-shift regime permutes only background token identities and repetition patterns; the relevance suffix remains in the canonical vocabulary so that the label rule is preserved.

\subsection{Config summary}
Using the notation $\theta = (b, \rho, \eta, p, d)$ from Section~4, where the coordinates denote background strength, relevance prevalence, relevance noise, repeat probability, and maximum nesting depth, the primary evaluation uses
\[
\theta_A = (0.95, 0.10, 0.08, 0.95, 5), \qquad
\theta_B = (0.30, 0.95, 0.01, 0.35, 2),
\]
with dataset A background-heavy and relevance-poor relative to dataset B.

The background-only ablation keeps the background coordinates $(b, p, d)$ from the primary evaluation while setting both datasets to $(\rho, \eta) = (0.60, 0.03)$. The relevance-only ablation keeps the relevance coordinates $(\rho, \eta)$ from the primary evaluation while setting both datasets to $(b, p, d) = (0.60, 0.65, 3)$.

\subsection{Probe evaluation protocol}
All reported OOD probe-accuracy numbers use the same frozen backbone and the same one-hidden-layer MLP probe family. The reported runs use hidden size \texttt{256}, GELU activation, learning rate \texttt{1e-3}, \texttt{100} probe epochs, and batch size \texttt{1024}. The probe reads the final hidden state of the prefix that excludes the answer token.

The main OOD metric corresponds to $\Pi_{\mathrm{main}}$: it fits the probe on the full labeled pretraining split and evaluates it on the full held-out test and OOD splits. The filtered informative-subset diagnostic corresponds to $\Pi_{\mathrm{diag}}$: it first restricts all three labeled splits to informative examples and then repeats the same fitting-and-evaluation procedure on that filtered data. The resulting diagnostic therefore functions as a mechanistic check rather than as a pure test-time slice of the primary evaluation.

\subsection{Additional proof details for Proposition~\ref{prop:main}}
Proposition~\ref{prop:main} is existential and pipeline-relative. The argument does not require every conceivable proxy, observer, or OOD metric to fail. It only requires a fixed task-agnostic proxy whose value can be increased by compressible background regularity, together with a target task whose OOD success depends on a separate relevance variable.

More concretely, the construction uses three ingredients.
\begin{enumerate}[leftmargin=1.2em]
    \item The scalar proxy $\cS$ is computed from pretraining behavior under a fixed learner and therefore rewards regularities that reduce next-token prediction loss, whether or not those regularities are useful for the target task.
    \item The target label depends only on the relevance variable $R$, so background regularity can improve $\cS$ without directly determining the target task.
    \item The OOD shift perturbs the surface realization of the background process while preserving the mechanism $R \mapsto Y$.
\end{enumerate}

Under these conditions, choose $\cD_A$ so that the background generator is stronger, more repetitive, and easier for the fixed learner to model, while the relevance signal is weaker, rarer, or noisier. Choose $\cD_B$ so that the background generator is less redundant but the relevance signal is stronger and more stable.

\paragraph{Why the proxy can still rank dataset A above dataset B.}
Because both pretraining and the proxy are computed on full sequences, the proxy can be driven upward by large amounts of compressible background regularity. When the background advantage in $\cD_A$ is made sufficiently large relative to the relevance suffix, the fixed learner achieves lower validation loss on $\cD_A$ than on $\cD_B$, so the induced proxy gap becomes strictly positive: $\cS(\cD_A) > \cS(\cD_B)$.

\paragraph{Why OOD probe accuracy can still favor dataset B.}
The target task ignores the background variable and depends only on $R$. If $\cD_B$ presents the relevance pattern more often and with less noise, then its frozen representation exposes the task-relevant cue more reliably to the probe. Once the OOD shift disrupts background statistics while preserving the label rule, the extra background regularity in $\cD_A$ need not translate into stronger OOD probe accuracy. With sufficiently separated relevance prevalence and noise levels, the induced OOD ordering is also strict in the opposite direction: $\OODPerf(\cD_A \to \cT) < \OODPerf(\cD_B \to \cT)$.

\paragraph{What the proposition does and does not show.}
This expanded argument clarifies why the proposition should be read as a boundary result for the fixed probe evaluation used in this paper. It shows that a task-agnostic total-structure proxy can rank dataset A above dataset B even when dataset B still achieves higher OOD probe accuracy under that same setup. It does not show that all structure-based metrics fail, nor that positive proxy--performance correlations are impossible once additional alignment assumptions are imposed.

\section*{NeurIPS Paper Checklist}

\begin{enumerate}

\item {\bf Claims}
    \item[] Question: Do the main claims made in the abstract and introduction accurately reflect the paper's contributions and scope?
    \item[] Answer: \answerYes{}
    \item[] Justification: The abstract and Sections~1, 3, 4, 5, and 6 state a controlled counterexample claim with explicit scope conditions. They also note that the primary all-sample result appears in two of the three seeds and that the filtered informative-subset diagnostic is presented as mechanistic support rather than as the primary evaluation.
    \item[] Guidelines:
    \begin{itemize}
        \item The answer \answerNA{} means that the abstract and introduction do not include the claims made in the paper.
        \item The abstract and/or introduction should clearly state the claims made, including the contributions made in the paper and important assumptions and limitations. A \answerNo{} or \answerNA{} answer to this question will not be perceived well by the reviewers. 
        \item The claims made should match theoretical and experimental results, and reflect how much the results can be expected to generalize to other settings. 
        \item It is fine to include aspirational goals as motivation as long as it is clear that these goals are not attained by the paper. 
    \end{itemize}

\item {\bf Limitations}
    \item[] Question: Does the paper discuss the limitations of the work performed by the authors?
    \item[] Answer: \answerYes{}
    \item[] Justification: Section~5 discusses the synthetic setting, the fixed pretraining-and-probing setup and its OOD probe-accuracy metric, the filtered informative-subset diagnostic, the seed sensitivity of the primary all-sample result (two of the three seeds), the fact that the theory section provides a proof sketch in the main text and additional proof details in the appendix rather than a fully formal proof, and the lack of evidence across substantially different model families.
    \item[] Guidelines:
    \begin{itemize}
        \item The answer \answerNA{} means that the paper has no limitation while the answer \answerNo{} means that the paper has limitations, but those are not discussed in the paper. 
        \item The authors are encouraged to create a separate ``Limitations'' section in their paper.
        \item The paper should point out any strong assumptions and how robust the results are to violations of these assumptions (e.g., independence assumptions, noiseless settings, model well-specification, asymptotic approximations only holding locally). The authors should reflect on how these assumptions might be violated in practice and what the implications would be.
        \item The authors should reflect on the scope of the claims made, e.g., if the approach was only tested on a few datasets or with a few runs. In general, empirical results often depend on implicit assumptions, which should be articulated.
        \item The authors should reflect on the factors that influence the performance of the approach. For example, a facial recognition algorithm may perform poorly when image resolution is low or images are taken in low lighting. Or a speech-to-text system might not be used reliably to provide closed captions for online lectures because it fails to handle technical jargon.
        \item The authors should discuss the computational efficiency of the proposed algorithms and how they scale with dataset size.
        \item If applicable, the authors should discuss possible limitations of their approach to address problems of privacy and fairness.
        \item While the authors might fear that complete honesty about limitations might be used by reviewers as grounds for rejection, a worse outcome might be that reviewers discover limitations that aren't acknowledged in the paper. The authors should use their best judgment and recognize that individual actions in favor of transparency play an important role in developing norms that preserve the integrity of the community. Reviewers will be specifically instructed to not penalize honesty concerning limitations.
    \end{itemize}

\item {\bf Theory assumptions and proofs}
    \item[] Question: For each theoretical result, does the paper provide the full set of assumptions and a complete (and correct) proof?
    \item[] Answer: \answerNo{}
    \item[] Justification: Section~3 states the construction and assumptions for the main proposition, and Appendix~A adds explicit proof details for that proposition, but the submission still does not provide a fully formal proof.
    \item[] Guidelines:
    \begin{itemize}
        \item The answer \answerNA{} means that the paper does not include theoretical results. 
        \item All the theorems, formulas, and proofs in the paper should be numbered and cross-referenced.
        \item All assumptions should be clearly stated or referenced in the statement of any theorems.
        \item The proofs can either appear in the main paper or the supplemental material, but if they appear in the supplemental material, the authors are encouraged to provide a short proof sketch to provide intuition. 
        \item Inversely, any informal proof provided in the core of the paper should be complemented by formal proofs provided in appendix or supplemental material.
        \item Theorems and Lemmas that the proof relies upon should be properly referenced. 
    \end{itemize}

    \item {\bf Experimental result reproducibility}
    \item[] Question: Does the paper fully disclose all the information needed to reproduce the main experimental results of the paper to the extent that it affects the main claims and/or conclusions of the paper (regardless of whether the code and data are provided or not)?
    \item[] Answer: \answerYes{}
    \item[] Justification: Sections~4.1--4.7 describe the data construction, model, optimizer, seeds, and evaluation logic, and the supplemental material includes anonymized code, protocol configurations, aggregate result files, and minimal reproduction commands.
    \item[] Guidelines:
    \begin{itemize}
        \item The answer \answerNA{} means that the paper does not include experiments.
        \item If the paper includes experiments, a \answerNo{} answer to this question will not be perceived well by the reviewers: Making the paper reproducible is important, regardless of whether the code and data are provided or not.
        \item If the contribution is a dataset and\slash or model, the authors should describe the steps taken to make their results reproducible or verifiable. 
        \item Depending on the contribution, reproducibility can be accomplished in various ways. For example, if the contribution is a novel architecture, describing the architecture fully might suffice, or if the contribution is a specific model and empirical evaluation, it may be necessary to either make it possible for others to replicate the model with the same dataset, or provide access to the model. In general. releasing code and data is often one good way to accomplish this, but reproducibility can also be provided via detailed instructions for how to replicate the results, access to a hosted model (e.g., in the case of a large language model), releasing of a model checkpoint, or other means that are appropriate to the research performed.
        \item While NeurIPS does not require releasing code, the conference does require all submissions to provide some reasonable avenue for reproducibility, which may depend on the nature of the contribution. For example
        \begin{enumerate}
            \item If the contribution is primarily a new algorithm, the paper should make it clear how to reproduce that algorithm.
            \item If the contribution is primarily a new model architecture, the paper should describe the architecture clearly and fully.
            \item If the contribution is a new model (e.g., a large language model), then there should either be a way to access this model for reproducing the results or a way to reproduce the model (e.g., with an open-source dataset or instructions for how to construct the dataset).
            \item We recognize that reproducibility may be tricky in some cases, in which case authors are welcome to describe the particular way they provide for reproducibility. In the case of closed-source models, it may be that access to the model is limited in some way (e.g., to registered users), but it should be possible for other researchers to have some path to reproducing or verifying the results.
        \end{enumerate}
    \end{itemize}

\item {\bf Open access to data and code}
    \item[] Question: Does the paper provide open access to the data and code, with sufficient instructions to faithfully reproduce the main experimental results, as described in supplemental material?
    \item[] Answer: \answerYes{}
    \item[] Justification: The supplemental material contains anonymized code, configs, aggregate results, and a reviewer-facing README with reproduction commands. The data are synthetic and can be regenerated directly from the released scripts.
    \item[] Guidelines:
    \begin{itemize}
        \item The answer \answerNA{} means that paper does not include experiments requiring code.
        \item Please see the NeurIPS code and data submission guidelines (\url{https://neurips.cc/public/guides/CodeSubmissionPolicy}) for more details.
        \item While we encourage the release of code and data, we understand that this might not be possible, so \answerNo{} is an acceptable answer. Papers cannot be rejected simply for not including code, unless this is central to the contribution (e.g., for a new open-source benchmark).
        \item The instructions should contain the exact command and environment needed to run to reproduce the results. See the NeurIPS code and data submission guidelines (\url{https://neurips.cc/public/guides/CodeSubmissionPolicy}) for more details.
        \item The authors should provide instructions on data access and preparation, including how to access the raw data, preprocessed data, intermediate data, and generated data, etc.
        \item The authors should provide scripts to reproduce all experimental results for the new proposed method and baselines. If only a subset of experiments are reproducible, they should state which ones are omitted from the script and why.
        \item At submission time, to preserve anonymity, the authors should release anonymized versions (if applicable).
        \item Providing as much information as possible in supplemental material (appended to the paper) is recommended, but including URLs to data and code is permitted.
    \end{itemize}

\item {\bf Experimental setting/details}
    \item[] Question: Does the paper specify all the training and test details (e.g., data splits, hyperparameters, how they were chosen, type of optimizer) necessary to understand the results?
    \item[] Answer: \answerYes{}
    \item[] Justification: Sections~4.1--4.7 specify the synthetic distributions, model architecture, optimizer, data splits, and seed-based evaluation protocol, while the supplemental YAML files provide the exact runnable settings.
    \item[] Guidelines:
    \begin{itemize}
        \item The answer \answerNA{} means that the paper does not include experiments.
        \item The experimental setting should be presented in the core of the paper to a level of detail that is necessary to appreciate the results and make sense of them.
        \item The full details can be provided either with the code, in appendix, or as supplemental material.
    \end{itemize}

\item {\bf Experiment statistical significance}
    \item[] Question: Does the paper report error bars suitably and correctly defined or other appropriate information about the statistical significance of the experiments?
    \item[] Answer: \answerNo{}
    \item[] Justification: We report per-seed outcomes and success counts across three seeds, but we do not yet provide error bars, confidence intervals, or formal significance tests. The evidence should therefore be read as multi-seed directional evidence rather than a complete statistical analysis.
    \item[] Guidelines:
    \begin{itemize}
        \item The answer \answerNA{} means that the paper does not include experiments.
        \item The authors should answer \answerYes{} if the results are accompanied by error bars, confidence intervals, or statistical significance tests, at least for the experiments that support the main claims of the paper.
        \item The factors of variability that the error bars are capturing should be clearly stated (for example, train/test split, initialization, random drawing of some parameter, or overall run with given experimental conditions).
        \item The method for calculating the error bars should be explained (closed form formula, call to a library function, bootstrap, etc.)
        \item The assumptions made should be given (e.g., Normally distributed errors).
        \item It should be clear whether the error bar is the standard deviation or the standard error of the mean.
        \item It is OK to report 1-sigma error bars, but one should state it. The authors should preferably report a 2-sigma error bar than state that they have a 96\% CI, if the hypothesis of Normality of errors is not verified.
        \item For asymmetric distributions, the authors should be careful not to show in tables or figures symmetric error bars that would yield results that are out of range (e.g., negative error rates).
        \item If error bars are reported in tables or plots, the authors should explain in the text how they were calculated and reference the corresponding figures or tables in the text.
    \end{itemize}

\item {\bf Experiments compute resources}
    \item[] Question: For each experiment, does the paper provide sufficient information on the computer resources (type of compute workers, memory, time of execution) needed to reproduce the experiments?
    \item[] Answer: \answerNo{}
    \item[] Justification: The submission does not report execution time, peak memory, or a full compute budget breakdown for each run, so it does not yet support a complete ``Yes'' answer.
    \item[] Guidelines:
    \begin{itemize}
        \item The answer \answerNA{} means that the paper does not include experiments.
        \item The paper should indicate the type of compute workers CPU or GPU, internal cluster, or cloud provider, including relevant memory and storage.
        \item The paper should provide the amount of compute required for each of the individual experimental runs as well as estimate the total compute. 
        \item The paper should disclose whether the full research project required more compute than the experiments reported in the paper (e.g., preliminary or failed experiments that didn't make it into the paper). 
    \end{itemize}
    
\item {\bf Code of ethics}
    \item[] Question: Does the research conducted in the paper conform, in every respect, with the NeurIPS Code of Ethics \url{https://neurips.cc/public/EthicsGuidelines}?
    \item[] Answer: \answerYes{}
    \item[] Justification: The work uses synthetic data only, involves no human subjects, and releases only anonymized code and aggregate results. We are not aware of any aspect of the project that conflicts with the NeurIPS Code of Ethics.
    \item[] Guidelines:
    \begin{itemize}
        \item The answer \answerNA{} means that the authors have not reviewed the NeurIPS Code of Ethics.
        \item If the authors answer \answerNo, they should explain the special circumstances that require a deviation from the Code of Ethics.
        \item The authors should make sure to preserve anonymity (e.g., if there is a special consideration due to laws or regulations in their jurisdiction).
    \end{itemize}

\item {\bf Broader impacts}
    \item[] Question: Does the paper discuss both potential positive societal impacts and negative societal impacts of the work performed?
    \item[] Answer: \answerYes{}
    \item[] Justification: Section~5 notes that the direct societal footprint is limited because the work is synthetic, but also discusses the indirect positive impact of sharper evaluation and the negative impact of overstated proxy-based explanations for data valuation.
    \item[] Guidelines:
    \begin{itemize}
        \item The answer \answerNA{} means that there is no societal impact of the work performed.
        \item If the authors answer \answerNA{} or \answerNo, they should explain why their work has no societal impact or why the paper does not address societal impact.
        \item Examples of negative societal impacts include potential malicious or unintended uses (e.g., disinformation, generating fake profiles, surveillance), fairness considerations (e.g., deployment of technologies that could make decisions that unfairly impact specific groups), privacy considerations, and security considerations.
        \item The conference expects that many papers will be foundational research and not tied to particular applications, let alone deployments. However, if there is a direct path to any negative applications, the authors should point it out. For example, it is legitimate to point out that an improvement in the quality of generative models could be used to generate Deepfakes for disinformation. On the other hand, it is not needed to point out that a generic algorithm for optimizing neural networks could enable people to train models that generate Deepfakes faster.
        \item The authors should consider possible harms that could arise when the technology is being used as intended and functioning correctly, harms that could arise when the technology is being used as intended but gives incorrect results, and harms following from (intentional or unintentional) misuse of the technology.
        \item If there are negative societal impacts, the authors could also discuss possible mitigation strategies (e.g., gated release of models, providing defenses in addition to attacks, mechanisms for monitoring misuse, mechanisms to monitor how a system learns from feedback over time, improving the efficiency and accessibility of ML).
    \end{itemize}
    
\item {\bf Safeguards}
    \item[] Question: Does the paper describe safeguards that have been put in place for responsible release of data or models that have a high risk for misuse (e.g., pre-trained language models, image generators, or scraped datasets)?
    \item[] Answer: \answerNA{}
    \item[] Justification: The paper does not release a high-risk model, a scraped dataset, or any system intended for open deployment. The supplemental material contains only small synthetic-data code and aggregate experiment files.
    \item[] Guidelines:
    \begin{itemize}
        \item The answer \answerNA{} means that the paper poses no such risks.
        \item Released models that have a high risk for misuse or dual-use should be released with necessary safeguards to allow for controlled use of the model, for example by requiring that users adhere to usage guidelines or restrictions to access the model or implementing safety filters. 
        \item Datasets that have been scraped from the Internet could pose safety risks. The authors should describe how they avoided releasing unsafe images.
        \item We recognize that providing effective safeguards is challenging, and many papers do not require this, but we encourage authors to take this into account and make a best faith effort.
    \end{itemize}

\item {\bf Licenses for existing assets}
    \item[] Question: Are the creators or original owners of assets (e.g., code, data, models), used in the paper, properly credited and are the license and terms of use explicitly mentioned and properly respected?
    \item[] Answer: \answerYes{}
\item[] Justification: The supplementary material includes a dependency inventory for the released code path, together with pinned versions, reported licenses, and official project pages for the external software assets it uses.
    \item[] Guidelines:
    \begin{itemize}
        \item The answer \answerNA{} means that the paper does not use existing assets.
        \item The authors should cite the original paper that produced the code package or dataset.
        \item The authors should state which version of the asset is used and, if possible, include a URL.
        \item The name of the license (e.g., CC-BY 4.0) should be included for each asset.
        \item For scraped data from a particular source (e.g., website), the copyright and terms of service of that source should be provided.
        \item If assets are released, the license, copyright information, and terms of use in the package should be provided. For popular datasets, \url{paperswithcode.com/datasets} has curated licenses for some datasets. Their licensing guide can help determine the license of a dataset.
        \item For existing datasets that are re-packaged, both the original license and the license of the derived asset (if it has changed) should be provided.
        \item If this information is not available online, the authors are encouraged to reach out to the asset's creators.
    \end{itemize}

\item {\bf New assets}
    \item[] Question: Are new assets introduced in the paper well documented and is the documentation provided alongside the assets?
    \item[] Answer: \answerYes{}
    \item[] Justification: The supplementary material documents the new synthetic generation and protocol code, includes runnable configurations, and provides a reviewer-facing README with minimal reproduction commands.
    \item[] Guidelines:
    \begin{itemize}
        \item The answer \answerNA{} means that the paper does not release new assets.
        \item Researchers should communicate the details of the dataset\slash code\slash model as part of their submissions via structured templates. This includes details about training, license, limitations, etc. 
        \item The paper should discuss whether and how consent was obtained from people whose asset is used.
        \item At submission time, remember to anonymize your assets (if applicable). You can either create an anonymized URL or include an anonymized zip file.
    \end{itemize}

\item {\bf Crowdsourcing and research with human subjects}
    \item[] Question: For crowdsourcing experiments and research with human subjects, does the paper include the full text of instructions given to participants and screenshots, if applicable, as well as details about compensation (if any)? 
    \item[] Answer: \answerNA{}
    \item[] Justification: The paper does not involve crowdsourcing, surveys, or any other study with human participants.
    \item[] Guidelines:
    \begin{itemize}
        \item The answer \answerNA{} means that the paper does not involve crowdsourcing nor research with human subjects.
        \item Including this information in the supplemental material is fine, but if the main contribution of the paper involves human subjects, then as much detail as possible should be included in the main paper. 
        \item According to the NeurIPS Code of Ethics, workers involved in data collection, curation, or other labor should be paid at least the minimum wage in the country of the data collector. 
    \end{itemize}

\item {\bf Institutional review board (IRB) approvals or equivalent for research with human subjects}
    \item[] Question: Does the paper describe potential risks incurred by study participants, whether such risks were disclosed to the subjects, and whether Institutional Review Board (IRB) approvals (or an equivalent approval/review based on the requirements of your country or institution) were obtained?
    \item[] Answer: \answerNA{}
    \item[] Justification: The paper does not involve research with human subjects, so IRB approval is not applicable.
    \item[] Guidelines:
    \begin{itemize}
        \item The answer \answerNA{} means that the paper does not involve crowdsourcing nor research with human subjects.
        \item Depending on the country in which research is conducted, IRB approval (or equivalent) may be required for any human subjects research. If you obtained IRB approval, you should clearly state this in the paper. 
        \item We recognize that the procedures for this may vary significantly between institutions and locations, and we expect authors to adhere to the NeurIPS Code of Ethics and the guidelines for their institution. 
        \item For initial submissions, do not include any information that would break anonymity (if applicable), such as the institution conducting the review.
    \end{itemize}

\item {\bf Declaration of LLM usage}
    \item[] Question: Does the paper describe the usage of LLMs if it is an important, original, or non-standard component of the core methods in this research? Note that if the LLM is used only for writing, editing, or formatting purposes and does \emph{not} impact the core methodology, scientific rigor, or originality of the research, declaration is not required.
    %this research? 
    \item[] Answer: \answerNA{}
    \item[] Justification: No LLM is used as an important or non-standard component of the proposed method, data generation, or evaluation pipeline. Any incidental writing assistance would therefore fall outside the declaration requirement stated in the question.
    \item[] Guidelines:
    \begin{itemize}
        \item The answer \answerNA{} means that the core method development in this research does not involve LLMs as any important, original, or non-standard components.
        \item Please refer to our LLM policy in the NeurIPS handbook for what should or should not be described.
    \end{itemize}

\end{enumerate}

\end{document}